\newtheorem{theorem}{Theorem}[section]
\newtheorem{corollarytotheproof}[theorem]{Corollary (to the Proof)}
\newtheorem{definition}[theorem]{Definition}
\newtheorem{example}[theorem]{Example}
\newcommand{\bigo}{{\protect\cal O}}
\newcommand{\condition}{\,\mid \:}
\newcommand{\p}{\ensuremath{\mathrm{P}}}
\newcommand{\np}{\ensuremath{\mathrm{NP}}}
\newcommand{\sat}{\ensuremath{\mathrm{SAT}}}
\newcommand{\conp}{\ensuremath{\mathrm{coNP}}}
\newcommand{\pneqnp}{\ensuremath{\p  \neq \np}}
\newcommand{\npintconp}{\ensuremath{\np \cap \conp}}
\newcommand{\npinterconp}{\ensuremath{\npintconp}}
\newcommand{\pisnotinter}{\ensuremath{\p \neq \npinterconp}}
\newcommand{\pisnp}{\ensuremath{\p = \np}}
\def\mmmddyyyy{\ifcase\month\or Jan\or Feb\or Mar\or Apr\or May\or Jun\or Jul\or Aug\or Sep\or Oct\or Nov\or Dec\fi \space\number\day, \number\year}
\def\hhmm{\ifnum\hour<10 0\fi\number\hour :%
  \ifnum\minutes<10 0\fi\number\minutes}
\author{
        Lane A. Hemaspaandra\thanks{Work done in part while 
visiting ETH-Z\"urich and the University of Dusseldorf.}\\
        Department of Computer Science \\
        University of Rochester \\
        Rochester, NY 14627, USA
\and
David E. Narv\'{a}ez\\
College of Computing and Inf.\ Sciences\\
        Rochester Institute of Technology \\
        Rochester, NY 14623, USA}
\date{June 14, 2017;
revised November 1, 2018}
\title{%
Existence versus Exploitation: The Opacity of 
Backdoors and Backbones
Under a Weak Assumption}
\begin{document}
\sloppy

\maketitle

\begin{abstract}

  Backdoors and backbones of Boolean formulas are hidden structural
  properties.  A natural goal, already in part realized, is that
  solver algorithms seek to obtain substantially better performance by
  exploiting these structures.

However, the present paper is not intended to improve 
the performance of SAT solvers, but rather is a cautionary 
paper.
In particular, the theme of this paper 
is that there is a potential chasm 
between the existence of such structures in the Boolean formula and being 
able to effectively exploit them.  This does not mean that 
these structures are not useful to solvers.  It does mean that 
one must be very careful not to assume that it is 
computationally
easy to go from the existence 
of a structure to being able to 
get one's hands on it
and/or being able to exploit the structure.

For example, in this paper we show that, under the
  assumption that $\pneqnp$, there are 
  easily recognizable families of Boolean formulas with strong
  backdoors that are easy to find, yet for which it is 
  hard (in fact, NP-complete) to determine
  whether the formulas are satisfiable.
We also show that, also under the assumption $\pneqnp$, 
there are
easily recognizable sets 
of
  Boolean formulas for which it is 
hard (in fact, NP-complete)
to determine whether they have
a large backbone. 
\end{abstract}

\section{Introduction}
Many algorithms for the Boolean satisfiability problem exploit hidden
structural properties of formulas in order to find a satisfying
assignment or prove that no such assignment exists. These structural
properties are called hidden because they are not explicit in the
input formula. A natural question that arises then is what is the
computational complexity associated with these hidden structures.  In
this paper we focus on two hidden structures: 
backbones and strong
backdoors~\cite{gom-sel-wil:c:backdoors}.

The complexity of decision problems associated with backdoors and
backbones has been studied by 
Nishimura, Ragde, and Szeider~\cite{nis-rag-sze:c:backdoors-horn},
Kilby, Slaney, Thi\'{e}baux, and 
Walsh~\cite{kil-sla-thi-wal:c:backbones-and-backdoors}, and Dilkina,
Gomes, and Sabharwal~\cite{dil-gom-sab:j:backdoors}, among others.

In the present paper, we show that, under the assumption
that $\pneqnp$, 
there are easily recognizable families of formulas with 
strong backdoors
that
are easy to find, yet the problem of determining whether these
formulas are satisfiable
remains hard (in fact, NP-complete).

Hemaspaandra and Narv\'{a}ez~\cite{hem-nar:c:backbones-opacity} showed,
under the (rather strong) assumption that $\pisnotinter$, a
separation between the complexity of finding backbones and that of
finding the values to which the backbone variables must be set.
In the present paper, we also add
to that line of research by showing that, under the
(less demanding) assumption that
$\pneqnp$, there are families of formulas that are easy to recognize
(i.e., they can be recognized by polynomial-time algorithms) yet no
polynomial-time algorithm can, given a formula from the family,
decide whether the formula
has a large backbone (doing so is NP-complete).

Far from being a paper that is intended to speed up SAT solvers, this
is a paper trying to get a better sense of the (potential lack of)
connection between properties existing and being able to get one's
hands on the variables or variable settings that are the ones
expressing the property's existence.  That is, the paper's point is
that there is a potential gap between on one hand the existence of small
backdoors and large backbones, and on the other hand using those to
find satisfying assignments.  Indeed, the paper establishes not just
that (if $\pneqnp$) such gaps exist, but even rigorously proves that
if any NP set exists that is frequently hard (with respect to polynomial-time
heuristics), then sets of our sort
exist that are essentially just as frequently hard; we in effect prove
an inheritance of frequency-of-hardness result, under which our sets
are guaranteed to be essentially as frequently hard as any set in NP is.

Our results admittedly are theoretical results, but they speak both to
the importance of not viewing backdoors or backbones as magically
transparent---we prove that they are in some cases rather opaque---and
to the fact that the behavior we mention likely happens on quite dense
sets; and, further, since we tie this to whether any set is densely
hard, these SAT-solver issues due to this paper 
have now become inextricably linked to the
extremely important, long-open question of how 
resistant to polynomial-time heuristics the hardest sets in NP can
be.\footnote{We mention in passing that there 
are relativized worlds (aka~black-box models) in which NP sets exist for 
which all polynomial-time heuristics 
are asymptotically wrong half the time~\cite{hem-zim:j:balanced};
heuristics basically do no better than one would do by flipping 
a coin to give one's answer.
Indeed, that is known to hold with probability one 
relative to a random oracle, i.e., it holds in all but 
a measure zero set of possible worlds~\cite{hem-zim:j:balanced}.
Although many suspect that the same holds in the real world, 
proving that would 
separate $\np$ from $\p$ in an extraordinarily strong way, and 
currently even proving that $\p$ and $\np$ differ is 
viewed as likely being 
decades (or worse) away~\cite{gas:j:second-p-vs-np-poll}.}
We are claiming that these important hidden
properties---backdoors and backbones---have some rather challenging
behaviors that one must at least be aware of.  Indeed, what is most
interesting about this paper is likely not the theoretical
constructions themselves, but rather the behaviors that those
constructions prove must exist unless $\pisnp$.  We feel that
knowing that those behaviors cannot be avoided unless $\pisnp$ is of
potential interest to both AI and theory.  Additionally, the 
behavior in one of our results is closely connected to the 
deterministic time complexity of SAT; in our result
(Theorem~\ref{t:backdoors})
about easy-to-find hard-to-assign-values-to backdoors, we show
that the backdoor size bound in our theorem cannot be improved 
even slightly unless NP is contained in subexponential time.

The rest of this paper is organized as follows. Section
\ref{definitions} defines the notation we will use throughout this
paper. Sections~\ref{backdoors} 
and~\ref{backbones} 
contain our
results related to backdoors and backbones, respectively. Finally,
Section \ref{conclusion} adds some concluding remarks.

\section{Definitions and Notations}
\label{definitions}
For a Boolean formula $F$, we denote by $V(F)$ the set of
variables appearing in $F$. 

Adopting the notations of 
Williams, Gomes, and Selman~\cite{gom-sel-wil:c:backdoors},
we use the following.
A partial assignment of $F$ is a function
$a_S: S\rightarrow \{\textsc{True},\textsc{False}\}$ that
assigns Boolean values to the variables in a set
$S\subseteq V(F)$. For a Boolean value
$v\in\{\textsc{True},\textsc{False}\}$ and a variable
$x\in V(F)$, the notation $F[x/v]$ denotes the
formula $F$ after replacing every occurrence of $x$ by $v$ and
simplifying. This extends to partial assignments,
e.g., to $F[a_S]$, in the natural way.

For a finite set $A$, $\|A\|$ denotes $A$'s cardinality. 
For any string $x$, $|x|$ denotes the length of (number of characters of)
$x$.

For each set $T$ and each natural number $n$, $T^{\leq n}$ 
denotes the set of all strings in $T$ whose length is 
less than or equal to $n$.
In particular, 
$(\Sigma^*)^{\leq n}$ denotes the strings of length at most $n$,
over the alphabet $\Sigma$.

\section{Results on Backdoors to CNF Formulas}
\label{backdoors}
In this section we focus on Boolean formulas in conjunctive normal
form, or CNF\@.  A CNF formula is a conjunction of disjunctions, and the
disjunctions are called the \emph{clauses} of the formula.
Following Dilkina, Gomes, and
Sabharwal~\cite{dil-gom-sab:j:backdoors}, we define satisfiability of
CNF formulas using the language of set theory. 
This is done by
formalizing the intuition that, in order for an assignment to satisfy
a CNF formula, it must set at least one literal in every clause to
\textsc{True}. One can then define a CNF formula $F$ to be a collection
of clauses, each clause being a set of literals.  $F\in\sat$ if and
only if there exists an assignment $a_{V(F)}$ such that for
all clauses $C\in F$ there exists a literal $l\in C$ such that
$a_{V(F)}$ assigns $l$ to \textsc{True}. 
Under this formalization, to be in harmony with the standard
conventions that 
the truth value of the empty conjunctive (resp., disjunctive)
formula is \textsc{True} (resp., \textsc{False}),
$F$ must be taken to be in
$\sat$ if $F$ is empty, and $F$ must be taken to 
be in $\overline{\sat}$ if
$\emptyset\in F$ (since the empty CNF formula must be taken 
to be \textsc{False} as a consequence of the fact that 
the empty disjunctive formula is taken to be \textsc{False}); 
these two cases are called, respectively, 
$F$ being trivially \textsc{True} and $F$ being trivially 
\textsc{False} (as the conventions 
as just mentioned 
put these cases not just in $\sat$ and $\overline{\sat}$ but 
fix the truth values of the represented formulas to 
be \textsc{True} and \textsc{False}).
We can also formalize simplification using this
notation: after assigning a variable $x$ to \textsc{True} (resp.,
\textsc{False}), the formula is simplified by removing all clauses
that contain the literal $x$ (resp., $\overline{x}$) and removing the
literal $\overline{x}$ (resp., $x$) from the remaining clauses. This
formalization extends to simplification of a formula over a partial
assignment in the natural way.

\begin{example}
\label{e:cnf}
 Consider the CNF formula
 $F=(x_1\lor\overline{x_2}\lor\overline{x_3}\lor x_5)\land(x_1\lor
 x_2\lor x_4\lor
 x_5)\land(x_3\lor\overline{x_4})\land(\overline{x_1}\lor x_2\lor
 x_3\lor x_5)$. We can express this formula in our set theory
 notation as $F=\{\{x_1,\overline{x_2},\overline{x_3}, x_5\},\{x_1,
 x_2, x_4, x_5\}, \{x_3, \overline{x_4}\}, \{\overline{x_1}, x_2, x_3,
 x_5\}\}$. Suppose we assign $x_3$ to \textsc{False} and $x_4$ to
 \textsc{True}, we have
 $F[x_3/\textsc{False},x_4/\textsc{True}]=\{\emptyset,
 \{\overline{x_1}, x_2, x_5\}\}$, which is unsatisfiable because it
 contains the empty set.
\end{example}

Since CNF-SAT (the satisfiability problem restricted to CNF formulas)
is well-known to be $\np$-complete, a polynomial-time algorithm to
determine the satisfiability of CNF formulas is unlikely to exist.
Nevertheless, there are several restrictions of CNF formulas for which
satisfiability can be decided in polynomial time.
When a formula does not belong to any of these restrictions, it may have
a set of variables that, once the formula is simplified over a partial
assignment of these variables, the resulting formula belongs to one of
these tractable restrictions. A formalization of this idea is the
concept of backdoors.

\begin{definition}[Subsolver~\cite{gom-sel-wil:c:backdoors}]
  A polynomial-time 
algorithm $A$ is a \emph{subsolver} if, for each input formula $F$, $A$
  satisfies the following conditions.
  \begin{enumerate}
  \item $A$ either rejects the input $F$ (this indicates
that it declines to 
make a statement as to whether $F$ is satisfiable) 
or \emph{determines} $F$
(i.e., $A$ returns a satisfying assignment if $F$ is satisfiable and 
$A$ proclaims $F$'s
unsatisfiability if $F$ is unsatisfiable).
  \item If $F$ is trivially \textsc{True} 
$A$ determines $F$, and 
if $F$ is trivially \textsc{False} 
$A$ determines $F$.
  \item If $A$ determines $F$, then for each variable $x$ and each value
    $v$, $A$ determines $F[x/v]$.
  \end{enumerate}
\end{definition}

\begin{definition}[Strong Backdoor~\cite{gom-sel-wil:c:backdoors}]
  For a Boolean formula $F$, a nonempty subset $S$ of its variables is
  a \emph{strong backdoor} for a subsolver $A$ if, for all partial
  assignments $a_S$, $A$ determines $F[a_S]$
(i.e., if $F[a_S]$ is 
satisfiable $A$ 
returns a satisfying assignment and if $F[a_S]$ is 
unsatisfiable $A$ proclaims its unsatisfiability).
\end{definition}

Many examples of subsolvers can be found in the literature (for
instance, in Table~1 of~\cite{dil-gom-sab:j:backdoors}).  The
subsolver that is of particular relevance to this paper is the
\emph{unit propagation subsolver}, which focuses on \emph{unit
  clauses}. Unit clauses are clauses with just one literal. They play
an important role in the process of finding models (i.e., satisfying
assignments) because the literal
in that clause must be set to \textsc{True} in order to find a
satisfying assignment. The process of finding a model by searching for
a unit clause (for specificity and to ensure 
that it runs in polynomial time, let us say that our unit propagation
subsolver always focuses on 
the unit clause in the current formula whose encoding is the 
lexicographically least 
among the encodings of all unit clauses in the current formula), 
fixing the value of the variable in the unit clause,
and simplifying the formula resulting from that assignment is known in
the 
satisfiability 
literature as unit propagation. Unit propagation is an
important building block in the seminal DPLL algorithm for
$\sat$~\cite{dav-put:j:computing-proc-quant-theory,dav-log-lov:j:machine-program-theorem-proving}. Notice that the CNF
formulas whose satisfiability can be decided by just applying unit
propagation iteratively constitute a tractable restriction of $\sat$.
The unit propagation subsolver attempts to decide the satisfiability
of an input formula by using only unit propagation and empty clause
detection. If satisfiability cannot be decided this way, the subsolver rejects
the input formula.
Szeider~\cite{sze:j:backdoors-dll} has classified the parameterized
complexity of finding backdoors with respect to the unit propagation
subsolver.

\begin{example}
 Consider the formula $F$ from Example~\ref{e:cnf}. We will show that
 $\{x_1,x_3,x_5\}$ is a strong backdoor of $F$ with respect to the
 unit propagation subsolver by analyzing the possible assignments of
 these variables. Suppose $x_1$ is assigned to \textsc{True} and
 notice
 $F[x_1/\textsc{True}]=\{\{x_3,\overline{x_4}\},\{x_2,x_3,x_5\}\}$. From
 there it is easy to see that if $x_3$ is set to \textsc{True}, the
 resulting formula after simplification is trivially satisfiable. If
 $x_3$ is set to \textsc{False}, assigning $x_5$ to \textsc{True}
 yields the formula $\{\{\overline{x_4}\}\}$ after simplification and
 the satisfiability of this formula can be determined by the unit
 propagation subsolver. Assigning $x_5$ to \textsc{False} yields a
 formula with two unit clauses, $\{\{\overline{x_4}\},\{x_2\}\}$. The
 unit propagation subsolver will pick the unit clause
 $\{x_2\}$,\footnote{Here we assume that a clause $\{x\}$ precedes a
   clause $\{y\}$ in lexicographical order if $x$ precedes $y$ in
   lexicographical order.} assign the truth value of $x_2$ and
 simplify, and will then pick the (sole) remaining unit clause,
 $\{\overline{x_4}\}$, and assign the truth value of $x_4$ and simplify to
 obtain a trivially satisfiable formula. Now suppose $x_1$ is assigned
 to \textsc{False} and notice
 $F[x_1/\textsc{False}]=\{\{\overline{x_2},\overline{x_3},x_5\},\{x_2,x_4,x_5\},\{x_3,\overline{x_4}\}\}$. If
 we now assign $x_3$ to \textsc{True}, notice
 $F[x_1/\textsc{False},x_3/\textsc{True}]=\{\{\overline{x_2},x_5\},\{x_2,x_4,x_5\}\}$. If
 we assign $x_5$ to \textsc{True} $F$ simplifies to a trivially
 satisfiable formula. If we assign $x_5$ to \textsc{False}, the
 formula simplifies to $\{\{\overline{x_2}\},\{x_2,x_4\}\}$. The unit
 propagation subsolver will pick the unit clause $\{\overline{x_2}\}$,
 assign the truth value of $x_2$, and the resulting formula after
 simplification will be $\{\{x_4\}\}$ whose satisfiability can be
 determined by the unit propagation subsolver. If we assign $x_3$ to
 \textsc{False}, notice
 $F[x_1/\textsc{False},x_3/\textsc{False}]=\{\{x_2,x_4,x_5\},\{\overline{x_4}\}\}$. If
 we now assign $x_5$ to \textsc{True} and simplify, the resulting
 formula would be $\{\{\overline{x_4}\}\}$ whose satisfiability can be
 determined by the unit propagation subsolver. If we assign $x_5$ to
 \textsc{False} and simplify, the resulting formula would contain the
 unit clause $\{\overline{x_4}\}$. The unit propagation subsolver
 would then set the value of $x_4$ to \textsc{False} and simplify,
 yielding the formula $\{\{x_2\}\}$, whose satisfiability can also be
 determined by the unit propagation subsolver.
 
 It should be clear from the case analysis above that just setting the
 values of $x_1$ and $x_3$ is not enough for the unit propagation
 subsolver to always be able to determine the satisfiability of the
 resulting formula. In fact, a similar analysis done on every 2-element 
subset and
 every 3-element subset of $V(F)$---which we do not write out 
 here---shows
 that $\{x_1,x_3,x_5\}$ is actually the
 smallest strong backdoor of $F$ with respect to the unit propagation
 subsolver.
\end{example}

We are now ready to prove our main result about backdoors:
Under
the assumption that $\pneqnp$, there are families of Boolean formulas
that are easy to recognize and have strong unit propagation backdoors
that are easy to find, yet deciding whether the formulas in these
families are satisfiable 
remains $\np$-complete.

\begin{theorem} %
  \label{t:backdoors}
  If $\pneqnp$, for each $k\in\{1, 2, 3,\ldots\}$ there is
  a set $A$ of Boolean formulas such that all the following hold.
  \begin{enumerate}
  \item $A\in\p$ and $A\cap\sat$ is $\np$-complete.
  \item\label{i:backdoor-prop} Each formula $G$ in $A$ has a strong
    backdoor $S$ with respect to the unit propagation subsolver, with
    $\|S\|\leq\|V(G)\|^\frac{1}{k}$.
  \item There is a polynomial-time algorithm that, given
    $G\in A$, finds a strong backdoor having the 
    property stated in item~\ref{i:backdoor-prop} of this theorem.
  \end{enumerate}
\end{theorem}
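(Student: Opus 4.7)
The plan is a straightforward padding argument. Fix $k \geq 1$. I will define $A$ (suppressing dependence on $k$) to consist of all CNF formulas of the form $G = F \cup \{\{y_1\}, \{y_2\}, \ldots, \{y_m\}\}$, where $F$ is a CNF formula on variables $x_1, \ldots, x_n$ with $n \geq 1$, the $y_i$ are $m = n^k - n$ fresh variables disjoint from $\{x_1, \ldots, x_n\}$, and the encoding uses a fixed naming convention so that the decomposition of $G$ into its ``$F$-part'' and its ``padding part'' can be read off syntactically. Since $n^k$ is polynomial in $n$ for fixed $k$, this syntactic check runs in polynomial time, giving $A \in \p$.

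For $A \cap \sat$ being $\np$-complete, membership in $\np$ is immediate, and $\np$-hardness comes from reducing CNFSAT to $A \cap \sat$ via the padding map $F \mapsto F \cup \{\{y_1\}, \ldots, \{y_m\}\}$. This reduction runs in polynomial time (again because $k$ is fixed) and preserves satisfiability, since the padded unit clauses involve only fresh variables and are uniquely satisfied by setting every $y_i$ to \textsc{True}.

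The remaining task is to exhibit, for each $G \in A$, a strong unit-propagation backdoor of the required size, computable in polynomial time. I will take $S := \{x_1, \ldots, x_n\}$, which trivially satisfies $\|S\| = n \leq (n^k)^{1/k} = \|V(G)\|^{1/k}$ and is extractable in polynomial time from the syntactic decomposition of $G$. To verify that $S$ is a strong backdoor, observe that every partial assignment $a_S$ is in fact a total assignment to the variables of $F$, so $F[a_S]$ is either empty (trivially \textsc{True}) or contains the empty clause (trivially \textsc{False}). In the first case $G[a_S]$ reduces to the unit clauses $\{\{y_1\}, \ldots, \{y_m\}\}$, which the unit-propagation subsolver dispatches by sequentially propagating each $y_i = \textsc{True}$ until the formula becomes empty and is declared \textsc{True}; in the second case $G[a_S]$ already contains $\emptyset$ and the subsolver immediately declares it \textsc{False}. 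The only (very mild) point of care is confirming that the subsolver handles these two trivial boundary formulas, which is guaranteed directly by the subsolver definition.
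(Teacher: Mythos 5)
Your proof is correct and takes essentially the same approach as the paper: pad a CNF formula $F$ with $\|V(F)\|^k - \|V(F)\|$ fresh variables as unit clauses, so that the original variables $V(F)$ form a strong unit-propagation backdoor of size exactly $\|V(G)\|^{1/k}$ (any total assignment to $V(F)$ leaves behind only unit clauses, possibly plus an empty clause), while satisfiability and polynomial-time recognizability are preserved. Your explicit restriction to $n \geq 1$ is a minor extra nicety that sidesteps the zero-variable edge case, but otherwise the construction and verification match the paper's.
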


\begin{proof}
For $k=1$ the theorem is trivial, so we henceforward consider 
just the case where $k\in\{2, 3,\ldots\}$.
  Consider
(since in the following set definition 
$F$ is specified as being 
in CNF, we can safely start 
the following with ``$F \wedge$'' rather than
for example ``$(F) \wedge$'')
$A \in {\rm P}$ defined by
$$A =\{F\wedge(\mathtt{new}_1\wedge\cdots\wedge\mathtt{new}_{
      \|V(F)\|^k-\|V(F)\|}
  )\condition\textnormal{$F$ is a CNF formula}\},$$ where
$\mathtt{new}_i$ is the 
$i$th 
(in lexicographical order) legal
variable name that does not appear in $F$. For instance, if $F$
contains literals $\overline{x_1}$, $x_2$, $x_3$, and
$\overline{x_3}$, and if our legal variable universe is
$x_1, x_2, x_3, x_4,\ldots$, then $\mathtt{new}_1$ would be
$x_4$.
The backdoor is the set
of variables of $F$, which can be found in polynomial time by
parsing. It is clear that the formula resulting from simplification
after assigning values to all the variables of $F$ only has unit
clauses and potentially an empty clause, so satisfiability for this
formula can be decided by the unit propagation subsolver. Finally, it
is easy to see that
$F\wedge(\mathtt{new}_1\wedge\cdots\wedge\mathtt{new}_{
    \|V(F)\|^k-\|V(F)\|}
)\in\sat\Leftrightarrow F\in\sat$ so,
since the formula-part that is being postpended to $F$
can easily be polynomial-time constructed given $F$,
under the assumption that
$\pneqnp$ deciding satisfiability for the formulas in $A$ is hard.
\end{proof}
We mention in passing that one can change ``Boolean'' to
``Boolean CNF'' in Theorem~\ref{t:backdoors}'s statement, via adjusting
appropriately the use of parentheses in the proof's definition
of $A$ to ensure that $A$ itself is in CNF whenever $F$ is.

Let us address two natural worries the reader might have regarding
Theorem~\ref{t:backdoors}.  First, the reader might worry that the
hardness spoken of in the theorem occurs very infrequently (e.g.,
perhaps except for just 
one string at every double-exponentially spaced length everything 
is easy).  That is, are 
we giving a worst-case result that deceptively hides a low 
typical-case complexity?   We are not (unless all of NP
has easy typical-case complexity):
we show that if 
any 
set in NP is frequently hard 
with respect to polynomial-time
heuristics, then a set of our sort is almost as frequently hard
with respect to polynomial-time
heuristics.  We will show this as Theorem~\ref{t:freq-backdoors}.

But first let us address a different worry.  Perhaps some readers will
feel that the fact that Theorem~\ref{t:backdoors} speaks of backdoors
of size bounded by a fixed $k$th root in size is a weakness, and that
it is disappointing that the theorem does not establish its same
result for a stronger bound such as ``constant-sized backdoors,'' or
if not that then polylogarithmic-sized, or if
not that then at least ensuring that not just each fixed root is
handled in a separate construction/set
but that a single construction/set should 
assert/achieve the case of a growth rate that 
is asymptotically less than every
root.  Those are all fair and natural to wonder about.  However, we claim
that not one of those improvements of 
Theorem~\ref{t:backdoors}
can be proven without revolutionizing the deterministic speed of 
SAT\@.   In particular, the following result holds, showing that 
those  three cases would respectively put NP into P, 
quasipolynomial time, and subexponential time.

\begin{theorem}
\begin{enumerate}

\item{[Constant case]}
Suppose there is a 
$k \in \{1,2,3,\ldots\}$
and 
  a set $A$ of Boolean formulas such that all the following hold:
(a)~$A\in\p$ and $A\cap\sat$ is $\np$-complete;
(b)~each formula $G$ in $A$ has a strong
    backdoor $S$ with respect to the unit propagation subsolver, with
    $\|S\|\leq k$; and 
(c)~there is a polynomial-time algorithm that, given
    $G\in A$, finds a strong backdoor having the 
    property stated in item~(b).  %
Then $$\pisnp.$$

\item{[Polylogarithmic case]}
Suppose there is a function $s(n)$, with $s(n) = (\log n)^{\bigo(1)}$, and 
  a set $A$ of Boolean formulas such that all the following hold:
(a)~$A\in\p$ and $A\cap\sat$ is $\np$-complete;
(b)~each formula $G$ in $A$ has a strong
    backdoor $S$ with respect to the unit propagation subsolver, with
    $\|S\|\leq s({\|V(G)\|})$; and 
(c)~there is a polynomial-time algorithm that, given
    $G\in A$, finds a strong backdoor having the 
    property stated in item~(b). %
Then $\np$ is in quasipolynomial time, i.e., $$\np  \subseteq 
        \bigcup_{c > 0} {\rm DTIME}[2^{(\log n)^c}].$$

\item{[Subpolynomial case]}
Suppose there is a polynomial-time computable function $r$ and 
  a set $A$ of Boolean formulas such that all the following hold:
(a)~for each $k \in \{1,2,3,\ldots\}$, $r(0^n) = \bigo(n^\frac{1}{k})$;
(b)~$A\in\p$ and $A\cap\sat$ is $\np$-complete;
(c)~each formula $G$ in $A$ has a strong
    backdoor $S$ with respect to the unit propagation subsolver, with
    $\|S\|\leq r(0^{\|V(G)\|})$; and 
(d)~there is a polynomial-time algorithm that, given
    $G\in A$, finds a strong backdoor having the 
    property stated in item~(c).  %
Then $\np$ is in subexponential time, i.e., $$\np  \subseteq 
        \bigcap_{\epsilon > 0} {\rm DTIME}[2^{n^\epsilon}].$$
      \end{enumerate}
    \end{theorem}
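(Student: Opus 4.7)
The plan is to observe that a small strong unit-propagation backdoor, together with the fact that we can find it in polynomial time, gives a fast deterministic algorithm for $A \cap \sat$ by brute-forcing the backdoor; then the NP-completeness of $A \cap \sat$ lifts this to a running-time bound for all of NP via polynomial-time reductions.

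The core subroutine, which I would use in all three cases, is the following. Given $G \in A$, first invoke the promised polynomial-time algorithm to obtain the backdoor $S$. Then loop over each of the $2^{\|S\|}$ total assignments $a_S$ of the variables in $S$, and for each one run the unit propagation subsolver on $G[a_S]$. Because $S$ is a strong unit-propagation backdoor, the subsolver \emph{determines} each $G[a_S]$; so $G$ is satisfiable if and only if at least one branch reports satisfiability. Each branch runs in time polynomial in $|G|$, so the overall running time is $2^{\|S\|} \cdot \mathrm{poly}(|G|)$, and in particular $A \cap \sat$ is decidable in time $2^{\|S\|} \cdot \mathrm{poly}(|G|)$, with $\|S\|$ bounded as in the relevant hypothesis.

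To finish each case I would use the fact that $A \cap \sat$ is NP-complete, so any $L \in \np$ reduces in polynomial time to $A \cap \sat$; an instance of length $n$ produces a formula $G$ with $|G| \leq q(n)$ (hence $\|V(G)\| \leq q(n)$) for some polynomial $q$. In the constant case, $2^{\|S\|} \leq 2^k$ is a constant, so $A \cap \sat \in \p$ and therefore $\pisnp$. In the polylogarithmic case, $2^{\|S\|} \leq 2^{s(\|V(G)\|)} \leq 2^{(\log q(n))^{O(1)}} = 2^{(\log n)^{O(1)}}$, so $L$ is in quasipolynomial time; ranging over $L \in \np$ gives the stated inclusion. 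In the subpolynomial case, fix $\epsilon > 0$ and pick any $k$ with $1/k < \epsilon$; the hypothesis gives $r(0^m) \leq C_k \, m^{1/k}$ for a constant $C_k$, so $2^{\|S\|} \leq 2^{r(0^{\|V(G)\|})} \leq 2^{C_k \, q(n)^{1/k}} \leq 2^{n^\epsilon}$ for all sufficiently large $n$, giving $L \in {\rm DTIME}[2^{n^\epsilon}]$; since $\epsilon$ is arbitrary, $\np \subseteq \bigcap_{\epsilon > 0} {\rm DTIME}[2^{n^\epsilon}]$.

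There is no deep obstacle here: the proof is really just a ``brute-force the backdoor'' search combined with a careful accounting of how polynomial-time reductions interact with each of the three regimes. The one place that warrants a bit of care is the subpolynomial case, where one has to verify that absorbing the polynomial reduction blow-up $q(n) = n^{O(1)}$ inside the bound $r(0^m) = O(m^{1/k})$ still keeps the exponent below $n^\epsilon$; this is handled, as above, by choosing $k$ large enough in terms of $\epsilon$ and the degree of $q$, exploiting that the hypothesis on $r$ holds simultaneously for \emph{every} $k$.
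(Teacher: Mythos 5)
Your proof is correct, and the overall shape (brute-force the backdoor, then lift via NP-completeness and account for the polynomial blow-up of the reduction) matches the paper's. There is one noteworthy difference in route: you invoke hypothesis~(c)/(d) to \emph{find} the backdoor and then try only its $2^{\|S\|}$ assignments, whereas the paper never calls the backdoor-finding algorithm at all---it simply enumerates all $\binom{N}{k}$ (resp.\ $\binom{N}{s(\cdot)}$, $\binom{N}{r(\cdot)}$) candidate variable subsets and all assignments to each, relying only on hypotheses~(a) and~(b). The paper's route thereby shows that the backdoor-finding hypothesis is actually dispensable; the price it pays is needing to \emph{compute} the size bound in order to know how large a subset to try, which is why the theorem statement imposes polynomial-time computability on $r$ in the third part. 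Your route is symmetric: it uses~(c)/(d) but never needs to evaluate $s$ or $r$, so it would go through even without the computability assumption on $r$. Both end up with the same time bounds in all three regimes.

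One small wrinkle in your subpolynomial case: ``pick any $k$ with $1/k < \epsilon$'' is not quite enough on its own, since the reduction blow-up $q(n) = O(n^d)$ inflates the exponent to $n^{d/k}$, so one actually needs $k > d/\epsilon$. You flag exactly this point in your closing paragraph (``choosing $k$ large enough in terms of $\epsilon$ and the degree of $q$''), so there is no gap---just tighten the sentence so the choice of $k$ explicitly depends on $d$.
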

      
We can see this as follows.  
Consider the ``Constant case''---the first part---of the above theorem.
Let $k$ be the constant of that part.
Then there are at most
${N
\choose k}$ ways of choosing $k$ of the variables of a given Boolean
formula of $N$ bits (and thus of at most $N$ variables).
And for each of those ways, we can try all $2^{k}$ possible ways of
setting those variables.  This is $\bigo(N^{k})$ items to test---a
polynomial number of items.  If the formula is satisfiable, then via
unit propagation one of these must yield a satisfying assignment (in 
polynomial time). Yet
the set $A \cap \sat$ was NP-complete by the first condition of the
theorem.  So we have that $\pisnp$, since we just gave a
polynomial-time algorithm for $A \cap \sat$.
The other three cases are analogous (except in 
the final case, we in the theorem needed to 
put in the indicated polynomial-time 
constraint on the bounding function $r$ since otherwise it could be 
badly behaved; that issue doesn't affect the second part of the theorem
since even a badly behaved function $s$ of the second part is bounded above
by a simple-to-compute function $s'$ satisfying
$s'(n) = (\log n)^{\bigo(1)}$ and we can use $s'$ in place 
of $s$ in the proof).

Even the final part of the above theorem, which is the part 
that has the weakest hypothesis, implies that NP is in 
subexponential time.
However, it is widely suspected
        that the NP-complete sets lack subexponential-time algorithms.
And so we have 
        established that 
	the $n^{1/k}$ growth, which we \emph{do} 
prove 
in Theorem~\ref{t:backdoors},
	is the smallest bound in part~\ref{i:backdoor-prop} of that result 
        that one can hope to prove
Theorem~\ref{t:backdoors}
for without having to as a side effect put NP into a
	deterministic time class so small that we would have a
	revolutionarily fast deterministic algorithm for $\sat$.

Moving on, we now, as promised above, 
address the frequency of hardness of the sets we define
in Theorem~\ref{t:backdoors}, and show that if any set in NP is 
frequently hard then a set of our type is almost-as-frequently hard.
(Recall that, when $n$'s universe is 
the naturals as it is in the following theorem, ``for almost every $n$'' means 
``for all but at most a finite number of natural numbers $n$.'')
We will say that a (decision) 
algorithm errs with respect to $B$ on an input $x$
if the algorithm disagrees with $B$ on $x$, i.e., 
if 
the algorithm accepts $x$ yet $x \not\in B$ or 
the algorithm rejects $x$ yet $x \in B$.

\begin{theorem} %
  \label{t:freq-backdoors}
  If $h$ is any nondecreasing function and for some set $B\in\np$ it
  holds that each polynomial-time algorithm 
  errs with respect to $B$, 
  at infinitely many lengths $n$ (resp., for almost every length $n$),
  on at least $h(n)$ of the inputs up to that length, then
  there will exist an $\epsilon>0$ and a set $A\in\p$ of Boolean
  formulas satisfying the conditions 
  of Theorem~\ref{t:backdoors}, yet
  being such that each polynomial-time algorithm $g$, at infinitely
  many lengths $n$ (resp., for almost every length $n$), will fail to determine
  membership in $A\cap\sat$ for at least $h(n^\epsilon)$
  inputs of length at most $n$.
\end{theorem}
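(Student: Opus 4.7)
The plan is to transfer frequency-of-hardness from $B$ to $A \cap \sat$ via a polynomial-time reduction with only polynomial length blowup. First, since $B \in \np$, the Cook--Levin theorem provides a polynomial-time many-one reduction $f$ from $B$ to CNF-$\sat$, producing CNF formulas of length at most $|x|^c$ for some constant $c$. I would then modify $f$ to make it injective, which will be crucial for counting distinct errors: for each bit position $i$ of $x$, append to $f(x)$ a unit clause over a fresh variable whose name explicitly encodes the pair $(i, x_i)$. These appended unit clauses are trivially satisfiable by assigning their new variables appropriately, so they preserve satisfiability, and the original input $x$ is explicitly readable from the resulting formula $\hat f(x)$. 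Fixing any $k \in \{2, 3, \ldots\}$ and applying the padding construction from the proof of Theorem~\ref{t:backdoors} to $\hat f(x)$ then produces a formula $\varphi(x)$ in the corresponding set $A$ that is polynomial-time computable, injective in $x$, bounded in length by $|x|^d$ for some constant $d$ depending on $c$ and $k$, and satisfies $\varphi(x) \in A \cap \sat$ if and only if $x \in B$.

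With $\varphi$ in hand, the transfer of hardness is routine. Let $\mathcal{A}$ be any polynomial-time algorithm purporting to decide $A \cap \sat$. The composition $\mathcal{A} \circ \varphi$ is itself a polynomial-time algorithm, and whenever $\mathcal{A}$ errs on $\varphi(x)$ with respect to $A \cap \sat$, the composition errs on $x$ with respect to $B$. By the hypothesis of the theorem applied to $B$, at infinitely many (resp., almost every) length $n$, the algorithm $\mathcal{A} \circ \varphi$ errs on at least $h(n)$ inputs of length at most $n$. Since $\varphi$ is injective, these errors correspond to at least $h(n)$ distinct inputs $\varphi(x) \in A$ of length at most $n^d$ on which $\mathcal{A}$ errs with respect to $A \cap \sat$.

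The last step is to reindex from lengths of $B$-inputs to lengths of $A$-inputs. I would set $\epsilon = 1/(2d)$ (any $\epsilon < 1/d$ suffices). For the ``infinitely many'' case, each hard length $n$ for $B$ yields a length $N = n^d$ at which $\mathcal{A}$ errs on at least $h(n) = h(N^{1/d}) \geq h(N^\epsilon)$ inputs of $A$ of length at most $N$, using the nondecreasing nature of $h$, and infinitely many such $N$ arise. For the ``almost every'' case, for each sufficiently large $N$, take $n = \lfloor N^{1/d} \rfloor$, which exceeds any fixed threshold; the $B$-hardness property then applies at $n$, giving at least $h(n) \geq h(N^\epsilon)$ errors of $\mathcal{A}$ on inputs of $A$ of length at most $n^d \leq N$. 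The main obstacle I anticipate is ensuring injectivity of the reduction so that distinct $B$-errors do not collapse to a single $A \cap \sat$-input; the unit-clause encoding of $x$ handles this while preserving satisfiability, polynomial-time computability, and membership in $A$ after padding, after which the length bookkeeping is routine arithmetic using the monotonicity of $h$.
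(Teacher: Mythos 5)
Your proof takes essentially the same approach as the paper's: construct a one-to-one polynomial-time reduction from $B$ into CNF-$\sat$, push its outputs through the padding construction of Theorem~\ref{t:backdoors}, observe that any polynomial-time heuristic for $A\cap\sat$ composed with this reduction is a heuristic for $B$, and then do the polynomial length bookkeeping to obtain the $h(n^\epsilon)$ bound. The only (minor) differences are that you build the injective reduction by hand via Cook--Levin plus unit-clause encoding of the input bits, whereas the paper invokes a one-to-one reduction $r_B$ from a cited companion construction, and that you map into the broad set $A$ of Theorem~\ref{t:backdoors} itself rather than defining $A$ as the image of the composed reduction, which is in fact a slightly cleaner way to guarantee that $A$ satisfies all the conditions of Theorem~\ref{t:backdoors}.
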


Before getting to the proof of this theorem, let us give 
concrete examples that give a sense about what the theorem is saying
about density transference.  
It follows from Theorem~\ref{t:freq-backdoors} that
if there exists 
even one NP set such that each polynomial-time heuristic 
algorithm asymptotically errs
exponentially often up to each length (i.e., has 
$2^{n^{\Omega(1)}}$ errors), then
there are sets of our form that in the same 
sense fool each polynomial-time heuristic 
algorithm exponentially often.  
As a second example, 
it follows from Theorem~\ref{t:freq-backdoors} that
if there exists 
even one NP set such that each polynomial-time heuristic 
algorithm asymptotically errs
quasipolynomially often up to each length (i.e., has 
$n^{(\log n)^{\Omega(1)}}$ 
errors), then
there are sets of our form that in the same sense 
fool each polynomial-time heuristic 
algorithm quasipolynomially often.
Since almost everyone suspects that
some NP sets are quasipolynomially and indeed even 
exponentially densely hard, one must with 
equal strength of belief suspect that there are sets of our form
that are exponentially densely hard.

\begin{proof}[Proof of Theorem~\ref{t:freq-backdoors}]
For conciseness and to avoid repetition, we build  this proof 
on 
top of a proof 
(namely, of 
Theorem~\ref{t:freq-backbones})
that we will give later in the paper.  That later proof does 
not rely directly or indirectly 
on the present theorem/proof, so there is no circularity at issue
here.  However, 
readers wishing to read the present proof should 
probably delay doing that until after they have first read that later proof.

  We define $r_B$ as in the proof of Theorem~\ref{t:freq-backbones}
  (the $r_B$ given there draws on a construction from
  Appendix~A of~\cite{hem-nar:t4:backbones-opacity}, and due to that 
  construction's properties outputs only conjunctive normal form 
formulas). For a given $k$, we define
$$A=\{r_B(x)\wedge
(\mathtt{new}_1\wedge\cdots\wedge\mathtt{
      new}_{
      \|V(r_B(x))
\|^k-\|V(r_B(x)
        )\|})\condition x\in\Sigma^*\},$$ and
since
$r_B(x)\wedge(\mathtt{new}_1\wedge\cdots\wedge\mathtt{new}_{
    \|V(r_B(x))\|^k-
\|V(r_B(x)
      )\|})\in\sat\Leftrightarrow
r_B(x)\in\sat$ 
and 
$r_B(x)\in\sat
\Leftrightarrow x\in B$, we can now proceed as
in the proof of Theorem~\ref{t:freq-backbones}, since here too the tail's
length is 
polynomially bounded.
\end{proof}

\section{Results on Backbones}
\label{backbones}
For the sake of completeness, we start this section by restating the
definition of backbones as presented by Williams, Gomes, and
Selman~\cite{gom-sel-wil:c:backdoors}. We restrict ourselves to the
Boolean domain, since we only deal with Boolean formulas in this
paper.

\begin{definition}[Backbone~\cite{gom-sel-wil:c:backdoors}]
  For a Boolean formula $F$, a subset $S$ of its variables is a
  \emph{backbone} if there is a unique partial assignment $a_S$ such
  that $F[a_S]$ is satisfiable.
  \label{def:backbone}
\end{definition}

The \emph{size} of a backbone $S$ is the number of variables in
$S$. One can readily see from Definition~\ref{def:backbone} that all
satisfiable formulas have at least one backbone, namely, the empty
set. This backbone is called the \emph{trivial} backbone, while
backbones of size at least one are called \emph{nontrivial}
backbones. 
It follows from
Definition~\ref{def:backbone} that unsatisfiable formulas do not
have backbones. 
Note also that some satisfiable formulas have no 
nontrivial backbones, e.g., $x_1 \lor x_2 \lor x_3$ is 
satisfiable but has no nontrivial backbone.

\begin{example}
 Consider the formula $F=x_1\land(x_1\leftrightarrow
 \overline{x_2})\land(x_2\leftrightarrow x_3)\land(x_2\lor x_4\lor
 x_5)$. Any satisfying assignment of $F$ must have $x_1$ set to
 \textsc{True}, which in turn constrains $x_2$ and $x_3$. Then
 $\{x_1,x_2,x_3\}$ is a backbone of $F$, as is any subset of this
 backbone. It is also easy to see that $\{x_1,x_2,x_3\}$ is the
 largest backbone of this formula since the truth values of $x_4$ and
 $x_5$ are not entirely constrained in $F$ (since $F$ in effect 
is---once one applies the just-mentioned forced assignments---$x_4 \lor x_5$).
\end{example}

Our first result states that 
if $\pneqnp$ then there are families of
Boolean formulas that are easy to recognize, with the property that
deciding whether a formula in these families has a large
backbone is NP-complete (and so is hard).
As a corollary to its proof, we have that 
if $\pneqnp$ then there are families of
Boolean formulas that are easy to recognize, with the property that
deciding whether a formula in these families has a nontrivial
backbone is 
NP-complete (and so is hard).\footnote{We have not been able to find 
Corollary (to the Proof)~\ref{c:backbones-1} in the literature.
Certainly, two things that on their surface might seem
to be the claim we are making in 
Corollary (to the Proof)~\ref{c:backbones-1} are either trivially true
or are in the literature.  However, upon closer inspection they turn out to 
be quite different from our claim.

In particular, if one removes the word ``nontrivial''
from 
Corollary (to the Proof)~\ref{c:backbones-1}'s statement, and one 
is in the model in which every satisfiable formula is considered to 
have the empty collection of variables as a backbone and every 
unsatisfiable formula is considered to have no backbones, then 
the thus-altered version of 
Corollary (to the Proof)~\ref{c:backbones-1}
is clearly true, 
since if one with those changes takes $A$ to be the set of all 
Boolean formulas, then the theorem degenerates to the statement 
that if $\pneqnp$, then SAT is (NP-complete, and) not in $\p$.

Also, it is stated in 
Kilby 
et al.~\cite{kil-sla-thi-wal:c:backbones-and-backdoors}
that finding a backbone of CNF 
formulas is NP-hard.  However, though this might seem to be our result,
their claim and model differ from ours 
in many ways, making this a quite
different issue.  
First, their hardness refers to Turing reductions
(and in contrast our paper is 
about many-one reductions and many-one completeness). 
Second, 
they are not even speaking of NP-Turing-hardness---much
less NP-Turing-completeness---in the standard sense since
their model is assuming
a function reply from the oracle rather than 
having a set
as 
the oracle.
Third, even their
notion of backbones is quite different as it (unlike the influential
Williams, Gomes, and Selman 2003 paper~\cite{gom-sel-wil:c:backdoors}
and our paper) in effect requires 
that the function-oracle gives back both a variable \emph{and its
setting}.  Fourth, our claim is about \emph{nontrivial} backbones.}

\begin{theorem} %
  \label{t:backbones-2}
  For any real number $0<\beta < 1$, there is a set
  $A\in\p$ of Boolean formulas such that the language
$$L_A=\{F\condition\textnormal{$F\in A$ and $F$ has a 
backbone 
    $S$ with
    $\|S\|\geq\beta\|V(F)\|$}\}$$
  is $\np$-complete (and so
    if $\pneqnp$ then $L_A$ is not in $\p$).
\end{theorem}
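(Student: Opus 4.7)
The plan is to mimic the backdoor construction of Theorem~\ref{t:backdoors}: pad a CNF formula with fresh variables that are forced into a unique value, thereby manufacturing a guaranteed backbone of a prescribed relative size. Given $\beta\in(0,1)$, I will fix an integer $m$ growing linearly in $\|V(F_0)\|$ so that the ratio $m/(m+\|V(F_0)\|)$ exceeds $\beta$. Concretely, I will set $m=\lceil \frac{\beta}{1-\beta}\|V(F_0)\|\rceil$ and define
$$A=\{F_0 \wedge \mathtt{new}_1 \wedge \cdots \wedge \mathtt{new}_m \condition F_0 \textnormal{ is a CNF formula}\},$$
where $\mathtt{new}_1,\ldots,\mathtt{new}_m$ are the lexicographically first $m$ legal variable names not appearing in $F_0$. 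Membership in $A$ can be checked in polynomial time by parsing out the tail of unit clauses, verifying their variable names are the correct fresh ones, and checking the count $m$ against $\|V(F_0)\|$.

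Next, I will analyze $L_A$. For any $F=F_0 \wedge \mathtt{new}_1\wedge\cdots\wedge\mathtt{new}_m\in A$, we have $\|V(F)\|=\|V(F_0)\|+m$. If $F_0\in\sat$, I claim $S=\{\mathtt{new}_1,\ldots,\mathtt{new}_m\}$ is a backbone of $F$: the only partial assignment $a_S$ for which $F[a_S]$ is satisfiable is the one sending every $\mathtt{new}_i$ to \textsc{True}, since any other assignment to $S$ produces an empty clause from one of the unit clauses $\{\mathtt{new}_i\}$; and under that all-\textsc{True} assignment, $F[a_S]$ simplifies to $F_0$, which is satisfiable by hypothesis. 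The choice of $m$ then guarantees $\|S\|/\|V(F)\|\geq\beta$. Conversely, if $F_0\notin\sat$, then $F\notin\sat$ and hence $F$ has no backbone at all. Thus $L_A=\{F\in A \condition F\in\sat\}$.

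Finally, I will finish by verifying NP-completeness. Membership of $L_A$ in NP is immediate: check $F\in A$ by parsing, then guess a satisfying assignment of $F$ (whose existence, for $F\in A$, is equivalent to the existence of a large backbone by the argument above). NP-hardness follows from the obvious polynomial-time many-one reduction sending an arbitrary CNF formula $F_0$ to its padded version $F_0\wedge\mathtt{new}_1\wedge\cdots\wedge\mathtt{new}_m$; the ``only if'' direction of the above biconditional shows that $F\in L_A\Leftrightarrow F_0\in\sat$.

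I do not expect a serious obstacle. The only point requiring care is checking that the forced variables genuinely satisfy the uniqueness clause in Definition~\ref{def:backbone} (which is why unit clauses are used rather than some weaker device that only makes True a viable choice), and making the arithmetic so that $m$ is a polynomial-time-computable integer that strictly achieves the $\beta$ ratio even in small edge cases (e.g., when $\|V(F_0)\|$ is small or zero); both are handled by taking the ceiling in the definition of $m$.
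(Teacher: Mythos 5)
Your proposal is correct and is essentially the same construction as the paper's proof: pad $F_0$ with $m=\lceil \frac{\beta}{1-\beta}\|V(F_0)\|\rceil$ fresh unit clauses so that the padding forms a forced backbone of relative size at least $\beta$ exactly when $F_0$ is satisfiable, then reduce CNF satisfiability to $L_A$ by padding. The only (cosmetic) differences are that the paper builds $A$ over general Boolean formulas with explicit outer parentheses to delimit $G$ from the tail, restricts $G$ to have at least one variable, and routes the zero-variable degenerate cases to fixed strings in the reduction; your version handles the zero-variable case implicitly since $m=0$ there and $L_A$ still agrees with $\sat$.
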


\begin{corollarytotheproof} %
  \label{c:backbones-1}
  There is a set
  $A\in\p$ of Boolean formulas such that the language
$$L_A=\{F\condition\textnormal{$F\in A$ and $F$ has a 
nontrivial 
backbone 
    $S$}\}$$
  is $\np$-complete
      (and so
    if $\pneqnp$ then $L_A$ is not in $\p$).

\end{corollarytotheproof}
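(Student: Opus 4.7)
The plan is to establish the corollary via essentially the same padding reduction one would use for Theorem~\ref{t:backbones-2}, but with a simpler instantiation. Given an arbitrary CNF formula $F$ with variable set $V(F)$, I would form the formula $G = F \wedge y_1$, where $y_1$ is the lexicographically least variable not occurring in $F$ (following the same ``$\mathtt{new}_i$'' convention used in Theorem~\ref{t:backdoors}). Let $A$ be the set of all formulas of this syntactic shape. Membership in $A$ is decidable in polynomial time by parsing (identify the unit suffix, verify the new-variable condition, and verify that the remaining part is a CNF), so $A\in\p$.

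The key observations are: (i)~$G$ is satisfiable if and only if $F$ is satisfiable, since the unit clause forces $y_1$ to \textsc{True} and $y_1$ does not occur in $F$; (ii)~if $G$ is satisfiable, then every satisfying assignment sets $y_1$ to \textsc{True}, so $\{y_1\}$ is a nontrivial backbone of $G$; and (iii)~if $G$ is unsatisfiable, then by Definition~\ref{def:backbone} $G$ has no backbone at all, nontrivial or otherwise. Combining these three, $G \in L_A$ if and only if $F \in \sat$, so $F \mapsto G$ is a polynomial-time many-one reduction from CNFSAT to $L_A$, giving $\np$-hardness of $L_A$. Membership of $L_A$ in $\np$ follows because, on our restricted set $A$, the property ``has a nontrivial backbone'' has just been shown equivalent to plain satisfiability, so a nondeterministic machine can simply guess and verify a satisfying assignment of the input.

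The step that I expect to require the most care is the $\np$ upper bound. For arbitrary formulas, ``has a nontrivial backbone'' is not obviously in $\np$: the definition implicitly quantifies universally over all satisfying assignments to assert their agreement on some variable, and a naive analysis would only place the property at the $\Sigma_2^p$ level of the polynomial hierarchy. The construction is engineered precisely to sidestep this obstacle: inserting the single unit $y_1$ collapses the backbone question on $A$ to satisfiability, and the $\np$ bound then follows automatically. A secondary small point worth double-checking is that the definition of $A$ and the convention for $y_1$ are syntactically rigid enough that $A$ is genuinely in $\p$ and the reduction $F\mapsto G$ is well-defined as a function; both are straightforward under the lexicographic-least convention already in use elsewhere in the paper.
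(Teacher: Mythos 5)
Your proof is correct, but it takes a genuinely different (and more economical) route than the paper does.  The paper proves Corollary~\ref{c:backbones-1} by reusing verbatim the set $A$ that proves Theorem~\ref{t:backbones-2}: one fixes any $\beta$, conjoins $G$ with $q(G)=\lceil \beta\|V(G)\|/(1-\beta)\rceil$ fresh unit variables, and then observes that every member of $A$ either has no backbone at all or has a backbone of size at least $\beta\|V(F)\|$ --- in particular a nontrivial one --- so the ``large backbone'' language and the ``nontrivial backbone'' language over that $A$ coincide as sets.  You instead build a dedicated $A$ by appending a \emph{single} new unit variable $y_1$ to each $F$.  Your verification of the three key facts is sound: $F\wedge y_1$ is satisfiable iff $F$ is; when satisfiable, $\{y_1\}$ is a size-$1$ (hence nontrivial) backbone because $G[y_1/\textsc{False}]$ contains the empty clause while $G[y_1/\textsc{True}]=F$ is satisfiable, so exactly one partial assignment of $\{y_1\}$ works; and unsatisfiable formulas have no backbones by Definition~\ref{def:backbone}.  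That collapses ``has a nontrivial backbone'' on $A$ to plain satisfiability, giving both the $\np$ upper bound and $\np$-hardness via $F\mapsto F\wedge y_1$, exactly as you say.  The trade-off: the paper's single construction yields both Theorem~\ref{t:backbones-2} and the corollary simultaneously, whereas your simpler one-variable padding is tailored only to the corollary (a size-$1$ backbone cannot meet a $\beta\|V(F)\|$ threshold for large formulas), but it is arguably the more direct and self-contained argument for the nontrivial-backbone statement on its own.
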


\begin{proof}[Proof of Theorem~\ref{t:backbones-2} and
Corollary~\ref{c:backbones-1}]
We will first prove Theorem~\ref{t:backbones-2}, and then will note that 
Corollary~\ref{c:backbones-1}
follows easily as a corollary to the proof/construction.

So fix a $\beta$ from Theorem~\ref{t:backbones-2}'s statement.
For each Boolean formula $G$, let
  $$q(G)=\left\lceil\frac{\beta\|V(G)\|}{1-\beta}
  \right\rceil.$$ 
Define
$$A=\{
(G)\wedge(\mathtt{new}_1\wedge\mathtt{new}_2\wedge
\cdots\wedge\mathtt{new}_{q(G)})\condition\textnormal{
    $G$ is a Boolean formula having at least one variable}\},$$
where, as in the proof of Theorem~\ref{t:backdoors}, we define
$\mathtt{new}_i$ as the $i$th variable that does not appear in $G$.
Note that
$\mathtt{new}_1\wedge\mathtt{new}_2\wedge\cdots\wedge\mathtt{new}_{
  q(G)}$ is a backbone if and only if $G\in\sat$, thus
under the assumption that $\pneqnp$ and keeping in mind 
that for zero-variable formulas satisfiability is easy to decide,
it follows that no polynomial-time algorithm can
decide $L_A$, since the size of this backbone is $q(G)>0$,
which by our definition of $q$ 
will satisfy the condition $\|S\| \geq \beta \|V(F)\|$.  
Why does it satisfy that condition?  
$\|S\|$ here is $q(G)$.  And 
$\|V(F)\|$ here, since 
$F$ is the formula 
$(G)\wedge(\mathtt{new}_1\wedge\mathtt{new}_2\wedge
\cdots\wedge\mathtt{new}_{q(G)})$, equals $\|V(G)\| + q(G)$.
So the condition is claiming that 
$q(G) \geq \beta (\|V(G)\| + q(G))$, 
i.e., that $q(G) \geq 
{\beta \over (1-\beta)}
\|V(G)\|$,
which indeed holds in light of 
the definition of $q$.
And why do we claim that no polynomial-time algorithm
can decide $L_A$?  Well, note that 
$\sat$ many-one 
polynomial-time reduces to $L_A$ via the reduction
$g(H)$ that equals 
some fixed string in $L_A$ if $H$ is in $\sat$ and $H$ has zero variables 
and that equals 
some fixed string in $\overline{L_A}$ 
if  $H$ is not in $\sat$ and $H$ has zero variables (these two
cases are 
included merely to handle degenerate things such as $\textsc{True} \lor \textsc{False}$
that can occur if we allow \textsc{True} and \textsc{False} as atoms in our propositional 
formulas), and 
that equals 
$$(H)\wedge(\mathtt{new}_1\wedge\mathtt{new}_2\wedge
\cdots\wedge\mathtt{new}_{q(H)})$$
otherwise (the above formula is $H$ conjoined 
with a large number of new variables). 
Since $L_A$ is in $\np$,\footnote{If one just 
looks at the definition of $L_A$, one might worry that 
$L_A$ might have only $\np^\np$ as an obvious upper bound.
However, as noted above our particular choice of $A$ 
ensures that 
$\mathtt{new}_1\wedge\mathtt{new}_2\wedge\cdots\wedge\mathtt{new}_{
  q(H)}$ is a backbone 
of 
$(H)\wedge(\mathtt{new}_1\wedge\mathtt{new}_2\wedge
\cdots\wedge\mathtt{new}_{q(H)})$
if and only if $H\in\sat$; and that 
makes clear that our set is indeed in $\np$.
}
we have that it is 
$\np$-complete, and since $\p\neq\np$ was part of the theorem's
hypothesis, $L_A$ cannot be in $\p$.

The above proof establishes 
Theorem~\ref{t:backbones-2}.
Corollary~\ref{c:backbones-1} 
follows immediately from the proof/construction
of 
Theorem~\ref{t:backbones-2}.  Why?  
The set $A$ from the proof of 
Theorem~\ref{t:backbones-2}
is constructed
in such a way that each of its potential members 
$(G)\wedge(\mathtt{new}_1\wedge\mathtt{new}_2\wedge
\cdots\wedge\mathtt{new}_{q(G)})$ (where $G$ is a Boolean 
formula having at least one variable)
either has no 
nontrivial backbone (indeed, no backbone)
or has a backbone of size at least 
$\beta(\|V(G)\|)$.
Thus the issue of backbones 
that are nontrivial but smaller than 
$\beta(\|V(F)\|)$, where 
$F$ is 
$(G)\wedge(\mathtt{new}_1\wedge\mathtt{new}_2\wedge
\cdots\wedge\mathtt{new}_{q(G)})$,
does not
cause a problem under the construction.
That is, 
our $A$ (which itself is dependent on the value 
of $\beta$ one is interested in) is such that
we have ensured that 
$\{F\condition F\in A$ and $F$ has a 
nontrivial 
backbone 
    $S\} = 
\{F\condition F\in A$ and $F$ has a 
backbone 
    $S$ with
    $\|S\|\geq\beta\|V(F)\|\}$.
\end{proof}

We now 
address the potential 
concern that the hard instances for the decision problems
we just introduced may be so infrequent
that the relevance of Theorem~\ref{t:backbones-2} 
and Corollary~\ref{c:backbones-1} is undercut.
The following theorem
argues against that possibility by proving that, unless not a 
single $\np$ set
is frequently hard (in the sense made rigorous 
in the theorem's statement), there exist sets of our form that are frequently
hard.  
(This result is making for backbones a point analogous to the one 
our Theorem~\ref{t:freq-backdoors} makes for 
backdoors.  
Hemaspaandra and Narv\'{a}ez~\cite{hem-nar:c:backbones-opacity}
looks at frequency of hardness result for backbones, but 
with results focused on 
$\np \cap \conp$ rather than $\np$.)

\begin{theorem} %
  \label{t:freq-backbones}
  If $h$ is any
  nondecreasing function and for some set $B\in\np$ it holds that each
  polynomial-time 
algorithm
errs with respect to $B$, at
  infinitely many lengths $n$ (resp., for almost every length $n$), on at least
  $h(n)$ of the inputs up to that length, then there will
  exist an $\epsilon>0$ and a set $A\in\p$ of Boolean formulas
  satisfying the conditions of Theorem~\ref{t:backbones-2}, yet being
  such that each polynomial-time algorithm $g$, at infinitely many
  lengths $n$ (resp., for almost every length $n$), will fail to correctly
  determine membership in $L_A$ for at least
  $h(n^\epsilon)$ inputs of length at most $n$.

The same claim also holds for Corollary~\ref{c:backbones-1}.
\end{theorem}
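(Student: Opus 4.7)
The plan is to adapt Theorem~\ref{t:backbones-2}'s construction by ``seeding'' it with a polynomial-time many-one reduction $r_B$ from $B$ to $\sat$. Since $B\in\np$, such a reduction exists, and I would use the refinement given in Appendix~A of~\cite{hem-nar:t4:backbones-opacity}, which yields an $r_B$ that (i)~outputs only CNF formulas, (ii)~is injective, and (iii)~has range in $\p$. Define
$$A=\{(r_B(x))\wedge(\mathtt{new}_1\wedge\cdots\wedge\mathtt{new}_{q(r_B(x))})\condition x\in\Sigma^*\},$$
where $q$ and $\mathtt{new}_i$ are as in the proof of Theorem~\ref{t:backbones-2}. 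Property~(iii) puts $A$ in $\p$. The map $\phi(x):=(r_B(x))\wedge(\mathtt{new}_1\wedge\cdots\wedge\mathtt{new}_{q(r_B(x))})$ is a polynomial-time many-one reduction from $B$ to $L_A$: exactly as in the proof of Theorem~\ref{t:backbones-2}, $x\in B$ forces $r_B(x)\in\sat$ and hence the block of new variables is a backbone of $\phi(x)$ of size at least $\beta\|V(\phi(x))\|$, while $x\notin B$ yields an unsatisfiable $\phi(x)$ with no backbone at all. Combined with $L_A\in\np$ (the canonical backbone candidate is available, as discussed in the footnote of the proof of Theorem~\ref{t:backbones-2}), this gives that $L_A$ is $\np$-complete, so $A$ meets all the conditions of Theorem~\ref{t:backbones-2}.

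Next I would carry out the density transfer. Fix a constant $c$ with $|\phi(x)|\leq|x|^c$ for all sufficiently long $x$ and set $\epsilon=1/(c+1)$. The central observation is that $B(x)=L_A(\phi(x))$, so for any algorithm $g$ the composition $g'(x):=g(\phi(x))$ errs on $x$ with respect to $B$ exactly when $g$ errs on $\phi(x)$ with respect to $L_A$; by injectivity of $\phi$ (inherited from $r_B$), distinct errors of $g'$ produce distinct errors of $g$. Hence for every $n$, the number of errors of $g'$ on inputs of length at most $n$ is at most the number of errors of $g$ on inputs of length at most $n^c$. I then argue by contrapositive. In the i.o.\ case, if a polynomial-time $g$ had fewer than $h(m^\epsilon)$ errors on $L_A$ for almost every $m$, then setting $m=n^c$ gives $g'$ fewer than $h(n^{c/(c+1)})\leq h(n)$ errors on $B$ for almost every $n$ (using $h$ nondecreasing), contradicting the i.o.\ hypothesis on $B$. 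In the a.e.\ case, if a polynomial-time $g$ had fewer than $h(m^\epsilon)$ errors on $L_A$ for infinitely many $m$, then taking $n=\lfloor m^{1/c}\rfloor$ for each such $m$ and using that $m^{1/(c+1)}<m^{1/c}-1\leq n$ for large $m$ (so $h(m^\epsilon)\leq h(n)$), $g'$ has fewer than $h(n)$ errors on $B$ for infinitely many $n$, contradicting the a.e.\ hypothesis on $B$.

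The claim that the same transfer carries over to Corollary~\ref{c:backbones-1} is then immediate from the same construction, because (as already observed in the proof of Corollary~\ref{c:backbones-1}) for this particular family $A$ the conditions ``has a nontrivial backbone'' and ``has a backbone of size at least $\beta\|V(F)\|$'' define the very same set, so the counting argument transports verbatim. The main technical obstacle I anticipate is checking that the Appendix~A construction cleanly delivers all three properties~(i)--(iii) of $r_B$ simultaneously; if any of them is weaker there than needed, injectivity and range-in-$\p$ can be restored cheaply by prepending to $r_B(x)$ a disjoint block of fresh variables together with unit clauses literally encoding $x$, which preserves satisfiability and the backbone structure while adding only an additive polynomial to $|\phi(x)|$ and thus only a mild additional slack in the choice of $\epsilon$.
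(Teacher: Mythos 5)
Your proof is correct and follows essentially the same route as the paper's: both define $A$ by composing the backbone construction with a length-bounded, injective polynomial-time many-one reduction $r_B$ from $B$ to $\sat$, then transfer error counts through the induced map $\phi$ using its injectivity and a polynomial length bound. The only presentational difference is that you argue by contrapositive with a slightly more generous $\epsilon$, whereas the paper argues directly via a substitution $n_A=(n_B)^{2k+1}$; this does not change the substance.
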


\begin{proof}%
We will prove the theorem's statement regarding 
Theorem~\ref{t:backbones-2}.  It is not hard to also then 
see that the analogous claim holds regarding 
Corollary~\ref{c:backbones-1}.

$B \in \np$ and SAT is NP-complete.  
So let $r_B$ be a
  polynomial-time function, transforming
strings 
into Boolean formulas, such that
(a)~$r_B(x)\in\sat \Leftrightarrow x\in B$, and (b)~$r_B$ is
  one-to-one.   (A construction of such a function is given in 
  Appendix~A of~\cite{hem-nar:t4:backbones-opacity}, and let 
us assume that that construction is used.) As in
the proof of Theorem~\ref{t:backbones-2}, if $F$ is a Boolean formula we define
  $q(F)=\left\lceil\frac{\beta\|V(F)\|}{1-\beta}
  \right\rceil$.  

Without loss of generality, 
we assume that $r_B$ 
outputs only formulas having at least 
one variable.
Note that throughout this proof, $q$ is
applied only to outputs of $r_B$.   
Thus we have ensured that 
none of the logarithms in this proof
have a zero 
as their argument.

Set 
$$A=\{(r_B(x))\wedge(\mathtt{new}_1\wedge\mathtt{new}
    _2\wedge\cdots\wedge\mathtt{new}_{q(r_B(x))}
  )\condition x\in\Sigma^*\}.$$ Because $r_B$ is
computable in polynomial time, there is a polynomial $b$ 
such that for every input $x$ of length at most $n$, the length of
$r_B(x)$ is at most $b(n)$. 
Fix some such polynomial $b$, and let $k$ denote its degree.
In order to find a bound for the length of the added ``tail''
$\mathtt{new}_1\wedge\mathtt{new}_2\wedge
\cdots\wedge\mathtt{new}_{q(r_B(x))}$ in terms of $b(n)$,
notice that the length of the tail is less than some 
constant 
(that holds over all $x$ and $n$, $|x| \leq n$) 
times 
$q(r_B(x))\log q(r_B(x))$. 
Since
$q(r_B(x)) =
\left\lceil\frac{\beta\|V(F)\|}{1-\beta}
  \right\rceil$
and the length of
$r_B(x)$ is at least a constant times the number of its variables, our
assumption that $|r_B(x)|\leq b(n)$ implies the existence of a
constant $c$ 
such that,
for all $x$ and $n$, $|x| \leq n$, we have  
 $q(r_B(x))\leq c\cdot b(n)$. 
Taken together, the two previous sentences 
imply the existence of a constant $d$ 
such that,
for all $x$ and $n$, $|x| \leq n$, we have that 
 the length of
$\mathtt{new}_1\wedge\mathtt{new}_2\wedge\cdots
\wedge\mathtt{new}_{q(r_B(x))}$
is at most $d\cdot b(n) \log(b(n))$, and so certainly is less than 
$d\cdot b^2(n)$.
Let $N$ be a natural number such that, for all $n\geq N$ and 
all $x$, $|x|\leq n$ implies that
$|(r_B(x))\wedge(\mathtt{new}_1\wedge\mathtt{new}
    _2\wedge\cdots\wedge\mathtt{new}_{q(r_B(x))}
  )|\leq n^{2k+1}$; by the previous sentence and the fact 
that $b$ is of degree $k$, such an $N$ will exist.
Let $g$ be a polynomial-time heuristic
for $L_A$.  Notice that $g\circ r_B$---i.e., $g(r_B(\cdot))$---is 
a polynomial-time heuristic
for $B$, since 
$(r_B(x))\wedge(\mathtt{new}_1\wedge\mathtt{new}
  _2\wedge\cdots\wedge\mathtt{new}_{q(r_B(x))})\in
L_A\Leftrightarrow r_B(x)\in\sat$
and  $r_B(x)\in\sat\Leftrightarrow x\in
B$. Let $n_B\geq N$ be such that there is a set of strings
$S_{n_B}\subseteq(\Sigma^*)^{\leq n_B}$,
$\|S_{n_B}\|\geq h(n_B)$, having the property that
for all $x\in S_{n_B}$, $g\circ r_B$ fails to correctly 
determine the membership
of $x$ in $B$. Consequently, there is a set of strings
$T_{n_B}\subseteq(\Sigma^*)^{\leq (n_B)^{2k+1}}$,
$\|T_{n_B}\|\geq h(n_B)$, such that for all
$x\in T_{n_B}$, $g$ fails to correctly 
determine the membership of $x$ in $L_A$;
in particular 
the set
$$T_{n_B}=\{(r_B(x))\wedge(\mathtt{new}_1\wedge\mathtt{new}
    _2\wedge\cdots\wedge\mathtt{new}_{q(r_B(x))})\,
  |\,x\in S_{n_B}\}$$
has this property.

Using the variable renaming $n_A=(n_B)^{2k+1}$, it is now easy 
to see that we have proven that every
length $n_B\geq N$ at which $g\circ r_B$ (viewed as 
a heuristic for $B$) errs on
at least $h(n_B)$ inputs of length up to $n_B$ has a
corresponding length $n_A$ at which $g$ (viewed as a heuristic for $L_A$) errs
on at least $h((n_A)^\frac{1}{2k+1})$ inputs of length up to
$n_A$.  Our hypothesis guarantees the existence of infinitely many
such $n_B\geq N$ (resp., almost all $n\geq N$ can take the role of
$n_B$), each with a corresponding $n_A$.  Setting
$$\epsilon=\frac{1}{2k+1},$$ our theorem is now proven.
\end{proof}

\section{Conclusions}
\label{conclusion}

We constructed easily recognizable families of Boolean formulas that
provide hard instances for decision problems related to backdoors and
backbones under the assumption that $\pneqnp$. 
In particular, we have shown that, under the assumption 
$\pneqnp$, 
there exist easily recognizable families of Boolean formulas with 
easy-to-find strong
  backdoors yet for which it is hard to determine
  whether the formulas are satisfiable.  
Under the same $\pneqnp$ assumption,
we have shown that 
there 
exist 
easily recognizable collections 
of
  Boolean formulas for which it is hard (in fact, NP-complete) to determine whether they have
  a backbone,
and that 
there 
exist 
easily recognizable collections 
of
  Boolean formulas for which it is hard (in fact, NP-complete) to determine whether they have
  a large backbone.
(These results can be taken as 
indicating that, under the very plausible assumption 
that $\pneqnp$, search and decision 
shear apart in complexity for backdoors and
	backbones.  That makes it particularly unfortunate that their
        definitions in the literature are 
        framed in terms of decision rather than search, especially since
        when one tries to put these to work in SAT solvers, it is the 
        search case that one typically tries to use and leverage.)

For both %
our backdoor and backbone results, we have shown that 
if \emph{any} problem $B$ in $\np$ is frequently hard, then there exist
families of Boolean formulas of the sort 
we describe that are hard almost as frequently as $B$.

\subsubsection*{Acknowledgments}  We are grateful to the
anonymous SOFSEM 2019 referees for helpful comments
and suggestions.

\end{document}